\def\btheta{\boldsymbol{\theta}}
\def\balpha{\boldsymbol{\alpha}}
\def\bbeta{\boldsymbol{\beta}}
\def\g{\mathbf{g}} 
\def\x{\mathbf{x}} 
\def\y{\mathbf{y}} 
\def\u{\mathbf{u}}
\def\I{\mathbf{I}}
\def\bTheta{\boldsymbol{\Theta}}
\def\E{\mathbb{E}}
\def\R{\mathbb{R}}
\def\1{\mathbbm{1}}
\newtheorem{proposition}{Proposition}
\title{GLUE: Gradient-free Learning to Unify Experts}
\name{Jong-Ik Park*, Shreyas Chaudhari*, Srinivasa Pranav*, Carlee Joe-Wong, and Jos\'e M. F. Moura\thanks{*Authors contributed equally.\\
Authors partially supported by NSF Grants CNS-2106891, CNS-2409138, and CCF-2327905. S. Chaudhari and S. Pranav partially supported by NSF Graduate Research Fellowships (GRFP; Grants DGE1745016, DGE2140739). S. Pranav partially supported by an ARCS Fellowship.}}
\address{Electrical and Computer Engineering, Carnegie Mellon University}
\begin{document}
\maketitle
%
\begin{abstract}
In many deployed systems (multilingual ASR, cross-hospital imaging, region-specific perception), multiple pretrained specialist models coexist. Yet, new target domains often require \emph{domain expansion}: a generalized model that performs well beyond any single specialist's domain.
Given a new target domain, existing methods obtain a \emph{single} strong initialization prior for the model parameters by blending expert models to \emph{initialize} a target model. However, heuristic blending---using mixing coefficients based on data size or proxy metrics---often yields \emph{lower target-domain test accuracy}, and learning these coefficients on the target domain's loss function typically requires computationally-expensive full backpropagation through a neural network.
We propose \textbf{GLUE}, \emph{Gradient-free Learning to Unify Experts}, which initializes the target model as a convex combination of fixed experts and learns the mixture coefficients of this combination via gradient-free two-point SPSA (simultaneous perturbation stochastic approximation) updates, requiring only two forward passes per step.
Across experiments on three datasets and three network architectures, GLUE produces model parameter priors that can be fine-tuned to outperform baselines. GLUE improves test accuracy by up to \textbf{8.5\%} over data-size weighting and by up to \textbf{9.1\%} over proxy-metric selection. GLUE either outperforms backpropagation-based full-gradient mixing or matches its performance within \textbf{1.4\%}.
\end{abstract}

\begin{keywords}
Expert Weight Mixing, Model Soup, Gradient-Free Optimization, Two-Point SPSA, Transfer Learning 
\end{keywords}

\section{Introduction}
Pretrained parameters, whether from specialist ``expert'' models or large foundation models, enable practitioners to quickly start learning new tasks by using the pretrained parameters to \emph{initialize} a target network and finetune it on the new task data~\cite{noroozi2018boosting,you2021logme,park2025fedbaf}.
This practice is common in multilingual or multi-accent automatic speech recognition (ASR)~\cite{turan2020achieving}, radiology models deployed across scanners and hospitals~\cite{akbarian2023evaluating}, perception stacks that face region/weather variation~\cite{shu2022hub, pal2024transfer}, recommendation systems partitioned by locale or user segments~\cite{fu2024exploring}, and foundation model adapters specialized for code, legal, or biomedical text~\cite{yuan2023power, zhou2024comprehensive, park2025fedbaf}. 
In these settings, initializing a target network with expert parameters often outperforms knowledge distillation \cite{hinton2015distilling, chaudhari2023knowledge}, which requires extra hyperparameters and can introduce optimization noise (e.g., temperature tuning, teacher-student mismatch). In contrast, directly initializing a target network with experts’ parameters simplifies downstream fine-tuning while preserving pretrained feature hierarchy and inductive biases~\cite{zagoruyko2016paying, micaelli2019zero, mirzadeh2020improved, moslemi2024survey}.

In general, adopting a \emph{single} expert's parameters is often suboptimal when building a general-purpose model, because real-world data are fragmented across related but distinct distributions~\cite{cohenask, shi2022robust, xiang2020learning}.
Different experts capture complementary structures~\cite{yuan2021reinforced, wu2021one, boiarov2022simultaneous}, motivating a parameter-space fusion of these experts as an initialization prior. Previous works~\cite{mcmahan2017communication, NonIID_P2P_Asilomar, 5GIoT_P2P_Asilomar, NTK_P2P_ICASSP, wortsman2022model, shi2025fedawa} form convex combinations of the $P$-dimensional parameters of $K$ expert models
$\{\btheta_i\}_{i=1}^K$
\begin{align}
    \btheta(\balpha)=\sum_{i=1}^K \alpha_i \btheta_i,\;\btheta_i \in \R^P
\label{eq:blending}
\end{align}
where the mixture coefficients ${\balpha}=(\alpha_1,\ldots,\alpha_K)$ define a single deployable model tailored to the target domain.

Common heuristics for finding $\balpha$ include (i) weighting experts by the size of their pretraining data~\cite{mcmahan2017communication}, (ii) weighting by an external metric such as held-out accuracy on a proxy dataset\cite{wortsman2022model,matena2022merging, park2025fedbaf}, or (iii) \emph{learning} the weights by optimizing the target loss\cite{ilharcoediting,yadav2023ties,yangadamerging}.
Among these, learning the weights $\balpha$ generally leads to the strongest single initialization (prior) for fine-tuning, because it optimizes the true target loss and can capture interactions among experts that global heuristics miss~\cite{mu2025comprehensive,wortsman2022model}.

\emph{The practical challenge is computational cost}. Even when the number of mixture weights $K$ is small, standard gradient descent on the mixture coefficients still requires computing the loss gradient $\nabla_{\btheta(\balpha)} \mathcal{L}$ for the full parameter vector $\btheta(\balpha)$ of dimension $P\gg K$, thus requiring a complete forward \emph{and} backward pass over the new model. 

We propose \emph{Gradient-free Learning to Unify Experts} (\textbf{GLUE}), a method for learning expert mixing coefficients $\balpha$ via two-point simultaneous perturbation stochastic approximation (SPSA)~\cite{sadegh2002optimal, gu2021black}.
Starting from \eqref{eq:blending}, GLUE optimizes only the low-dimensional vector $\balpha$~\cite{jacobs1991adaptive, liudarts}.
To do so, at each iteration, it samples a random direction $\u \in \R^K$, evaluates the loss at $\balpha\pm\delta \u$ (two forward passes with blended weights $\btheta(\balpha \pm \delta \u)$), and updates $\balpha$ using the finite-difference estimate. No backpropagation is required---only forward passes---making GLUE lightweight and easy to deploy. While SPSA has been used in other learning tasks, it may be inaccurate in high dimensions~\cite{sadegh2002optimal, nesterov2017random, gu2021black}. The key insight behind GLUE is that the mixture coefficients are only $K$-dimensional, mitigating this common drawback of SPSA. Our contributions are threefold: \\
$\bullet\,$\textbf{Cost-effective prior construction}: We introduce GLUE, an expert parameter mixing method optimized via two-point SPSA, requiring only forward passes of the blended model. \\
$\bullet\,$\textbf{Theory for efficiency and stability}: 
We derive a per-iteration cost advantage over full-gradient mixing and show how the two-point estimator's variance (and thus sensitivity to perturbation radius/step size) scales with the mixture dimension, explaining the stability of low-dimensional updates in GLUE.
\\
$\bullet\,$\textbf{Empirical validation}: We demonstrate consistent gains of GLUE over existing weighting approaches and competitive performance with full-gradient $\balpha$ optimization, while being cheaper per iteration; the resulting single prior finetunes effectively on target domains.

In the following, we review related work (Section~\ref{sec:related}); introduce \textbf{GLUE} and its optimization procedure (Section~\ref{sec:method}); develop computational and statistical analyses (Section~\ref{sec:analysis}); present experiments against baselines (Section~\ref{sec:experiments}); discuss practical considerations and limitations (Section~\ref{sec:discussion}); and conclude (Section~\ref{sec:conclusion}).

\section{Related Work} \label{sec:related}
A straightforward way to combine experts is to weight each model in proportion to the amount of data used to train it---an idea resembling that of federated averaging (FedAvg), where model updates from multiple clients are aggregated based on clients' local dataset sizes in order to produce a unified model that (on average) performs well on all clients' data~\cite{mcmahan2017communication}. While simple and scalable, this strategy is agnostic to the \emph{target} domain. Under distribution changes, such as domain expansion, data-size weighting cannot adapt to target loss signals and can overemphasize large but misaligned sources.

Another common approach involves choosing or reweighting experts using a proxy metric (e.g., held-out accuracy) and then averaging the expert model parameters. Examples include ``model soups''---uniform averaging or greedy selection before averaging---and Fisher-weighted merging, which uses per-parameter Fisher information with model-level mixing as a hyperparameter ~\cite{wortsman2022model, matena2022merging, park2025fedbaf}. These methods can be effective when a reliable proxy set exists, but they decouple weight choice from the training objective and often require manual tuning or discrete selection steps.

A third line of work learns the expert mixture coefficients by minimizing a target-driven objective, ranging from scalar task-vector combinations (TIES-Merging~\cite{yadav2023ties}) to interference-aware merging and adaptive (un)supervised schemes that tune model-wise coefficients (AdaMerging)~\cite{ilharcoediting,yangadamerging}. These gradient-based methods require computing $\nabla_{\btheta(\balpha)}\mathcal{L}$ via backpropagation through the blended network and then taking inner products with expert parameters (e.g., $\partial\mathcal{L}/\partial \balpha_i=\langle \nabla_{\btheta(\balpha)}\mathcal{L},\,\btheta_i\rangle$), which is computationally expensive.

Zeroth-order methods replace backpropagation with finite differences. To compute the two-point simultaneous perturbation stochastic approximation (SPSA) estimator~\cite{spall2002multivariate}, $\balpha$ can be perturbed along a random direction to estimate a directional derivative with two forward passes, staying in the $K$-dimensional mixture space (where $K$ is the number of experts). However, the estimator's variance grows with dimension and curvature, demanding many directions or delicate tuning and often causing slow or unstable convergence for large $K$ or ill-conditioned losses~\cite{ZHANG2022110006}.


\section{Methodology} \label{sec:method}
In this section, we introduce \textbf{GLUE}, \emph{Gradient-free Learning to Unify Experts}, which builds a single prior by mixing fixed experts and learning the mixing parameters $\balpha$ via two-point SPSA.

\subsection{Objective}
Given $K$ pre-trained expert models with parameters $\{\btheta_i\}_{i=1}^K$, we aim to synthesize a \emph{single} pretrained model $\btheta(\balpha)$ (defined in~\eqref{eq:blending}) that serves as a strong prior for a \emph{new target domain} that is related to the sources but exhibits a shifted data distribution. Throughout, we \emph{only} optimize the low-dimensional mixing coefficients $\balpha$; the expert parameters $\{\btheta_i\}_{i=1}^K$ remain fixed.

Let $f(\cdot;{\btheta})$ denote the model (neural network) instantiated with parameters $\btheta$. For a minibatch $\mathcal{B}=\{(\x_j, \y_j)\}_{j=1}^B$ from the target domain and a standard supervised loss $\ell$ (cross-entropy for classification or squared error for regression), we minimize empirical risk:
\begin{align*}
\mathcal{L}(\balpha)=\frac{1}{B}\sum_{j=1}^B \ell\left(f(\x_j;{\btheta(\balpha)}),\,\y_j\right)
\end{align*}
We treat ${\balpha}$ as the sole optimization variable, and employ a zeroth-order (two-point) update on $\balpha$.


\subsection{Two-point SPSA updates}
Since our optimization variable is the $K$-dimensional mixture, and zeroth-order estimates are accurate in lower dimensions~\cite{sadegh2002optimal, nesterov2017random, gu2021black}, we estimate the gradient of $\mathcal{L}$ with respect to $\balpha$ by using a two-point simultaneous perturbation stochastic approximator (SPSA).

At each iteration, we draw a random direction vector $\u\in\mathbb{R}^K$ (e.g., by drawing entries of $\u$ i.i.d. from a standard Gaussian distribution and normalizing) and choose a small radius $\mu>0$. We form perturbed unconstrained coefficients $\balpha_\pm=\balpha\pm\mu \u$ and corresponding blended parameters $\btheta(\balpha_\pm)=\sum_{i=1}^K {\alpha}_{{\pm}_i}\,\btheta_i$. At both blends, we then evaluate the loss on the same minibatch of target-domain examples, keeping inference deterministic across the two probes (e.g., BatchNorm in evaluation mode and fixed randomness) so that differences reflect only the perturbation:
\begin{align}
\label{eq:twoevals}
L_\pm=\frac{1}{b}\sum_{j=1}^b \ell\left(f(\x_j;\btheta(\balpha_\pm)),\,\y_j\right)
\end{align}
The directional finite difference along $\u$ is
\begin{align*}
d(\balpha; \u)=\frac{L_+ - L_-}{2\mu}
\end{align*}
which induces the Gaussian-smoothing gradient estimate
\begin{align*}
\widehat{\nabla}_{\balpha} \mathcal{L}(\balpha)=d(\balpha;\u)\,\u
\end{align*}
If desired, we can reduce variance by additionally averaging over $m$ independently sampled directions $\{\u_r\}_{r=1}^m$ in the same iteration,
\begin{align}
\widehat{\nabla}_{\balpha} \mathcal{L}(\balpha)=\frac{1}{m}\sum_{r=1}^{m}\frac{L^{(r)}_+-L^{(r)}_-}{2\mu}\,\u_r
\label{eq:estimated graident}
\end{align}
with $\u_r$ sampled as above and $L^{(r)}_\pm$ computed as in~\eqref{eq:twoevals}. Finally, we update the mixture parameters:
\begin{align}
\label{eq:update}
\balpha \leftarrow \balpha - \eta \widehat{\nabla}_{\balpha} \mathcal{L}(\balpha)
\end{align}
where $\eta>0$ is the step size. 
In our experiments, we perform SPSA and momentum updates in an unconstrained $\bbeta$-space and map to $\balpha=\mathrm{softmax}(\bbeta)$, replacing~\eqref{eq:update}. Optimizing with respect to $\bbeta$ ensures that $\btheta(\balpha)$ remains within the convex hull of expert parameters.


\subsection{From mixture to prior}
We initialize $\balpha = \frac{1}{K} \mathbf{1}$ and iterate the two-point updates over minibatches until validation performance plateaus. Formally, we target the empirical-risk minimizer and map it to one blended model:
\begin{align*}
\label{eq:prior}
\balpha^\star \in \operatorname*{argmin}_{\balpha\in\mathbb{R}^K} \mathcal{L}(\balpha),\;
\quad
\btheta^\star=\btheta(\balpha^\star)
\end{align*}
The resulting $\btheta^\star$ serves as the starting point for standard gradient-based adaptation on the target domain, providing a strong prior that concentrates the experts' most relevant inductive biases.

\section{Theoretical Analysis} \label{sec:analysis}
In this section, we show that the two-point (SPSA) procedure used in \textbf{GLUE} is (i) computationally cheaper per iteration than gradient-based mixing and (ii) statistically stable when the mixture dimension \(K\) (the number of experts) is sufficiently small.
To do so, we leverage results in stochastic approximation and zeroth-order optimization~\cite{sadegh2002optimal, nesterov2017random, gu2021black}; under standard smoothness and bounded-noise assumptions, two-point SPSA converges to first-order stationary points, so GLUE inherits these classical guarantees~\cite{ghadimi2013stochastic}.

\subsection{Cost for updating \texorpdfstring{${\balpha}$}{}: two-point (SPSA) vs. full-gradient}
We compare a single \emph{SPSA step on ${\balpha}$} with a \emph{full-gradient step on ${\balpha}$} when the model parameters are blended as
${\btheta}({\balpha})=\sum_{i=1}^K \alpha_i {\btheta}_i$ and the experts $\{{\btheta}_i\}_{i=1}^K$ are fixed.
\\ \indent 
Let $F$ denote the cost of one forward pass of the blended network on a minibatch of data, and let the cost of a full backward pass be $\gamma F$ with $\gamma \ge 2$ (typical for deep neural networks~\cite{gruslys2016memory}). Let $C_{\text{mix}}$ be the cost of the multiply-accumulate operations used to form a blended parameter vector ${\btheta}({\balpha})$ from the $K$ experts: $C_{\text{mix}}=O(PK)$ memory operations for parameter dimension $P$. Let $D_{{\balpha}}$ be the cost of computing the gradient $\nabla_{\balpha} \mathcal{L}
=\big[\langle \nabla_{\btheta(\balpha)} \mathcal{L}, {\btheta}_1\rangle,\ldots,\langle \nabla_{\btheta(\balpha)} \mathcal{L}, {\btheta}_K\rangle\big]$ given $\nabla_{\btheta(\balpha)} \mathcal{L}$,
which is an $O(PK)$ dot-product accumulation.
Evaluating $\nabla_{\balpha} \mathcal{L}$ at the current mixture requires (i) computing ${\btheta}({\balpha})$ once, (ii) one forward pass and one backward pass to compute $\nabla_{\btheta(\balpha)} \mathcal{L}$, and (iii) $D_{\balpha}$ for computing $K$ inner products with $\langle \nabla_{\boldsymbol{\theta}(\balpha)} \mathcal{L}, \boldsymbol{\theta}_i\rangle$.
With one perturbation pair ($m = 1$), SPSA evaluates the loss at two nearby mixtures ${\balpha}_\pm={\balpha} \pm \delta\u$, requiring two blends and two forwards, but \emph{no} backward. Therefore:
\begin{align*}
T_{\text{full}}&=(1+\gamma) F + C_{\text{mix}} + D_{{\balpha}},\quad T_{\text{SPSA}}=2(F+C_{\text{mix}})
\end{align*}
Whenever $T_{\text{SPSA}} < T_{\text{full}}$, SPSA is cheaper than the full-gradient step 
(i.e., $
C_{\text{mix}} < (\gamma-1) F + D_{\balpha}
$). In practice, the condition is easily met since (i) the inner product cost $D_{\balpha}$ is typically of the same order as the mixing cost $C_{\text{mix}}$ and (ii) the cost of the backward pass is typically more than $2F$ ($\gamma\geq 2$).

\subsection{Variance of two-point estimates}
We analyze the variance of the two-point (SPSA) gradient estimator $\overline{\g}(\balpha)$, an average over $m$ i.i.d. random direction vectors $\{\u_r\}_{r=1}^m$:
\begin{equation}
\begin{aligned}
    \overline{\g}(\balpha) &= \frac{1}{m} \sum_{r=1}^m \widehat{\g}(\balpha; \u_r) \\
    \widehat{\g}(\balpha;\mathbf{u}) &= \left(\frac{{\mathcal{L}}(\balpha + \mu \mathbf{u}) - {\mathcal{L}}(\balpha - \mu \mathbf{u})}{2\mu}\right) \mathbf{u},\;\;\mu > 0 \label{eq:ghat}
\end{aligned}
\end{equation}
Let $\g = \nabla_{\balpha}\mathcal{L}$ denote the true gradient of the loss with respect to $\balpha$. As $\mu\rightarrow 0$, $\widehat{\g}$ becomes the directional derivative of $\mathcal{L}$ with respect to $\mathbf{u}$ and the approximation error scales with the square of $\mu$: $\widehat{\g} = (\g^\top \mathbf{u})\mathbf{u} + O(\mu^2)$. Since the directional derivative is an unbiased estimator of $g$, both $\widehat{\g}$  and $\overline{\g}$ are also unbiased~\cite{s2013stochastic}. Therefore, we bound the variance of $\overline{\g}$ in the following proposition---where the variance refers to the trace of the covariance matrix of the averaged gradient estimator $\overline{\g}$. 
\begin{proposition}[Variance Bound]
\label{prop:variance}
The variance of the two-point (SPSA) gradient estimator $\overline{\g}(\balpha)$ is upper bounded as:
\begin{align*}
    \E\left[\|\overline{\g} - \g\|^2\right] &\leq \frac{K-1}{mK} \sigma_{\mathrm{max}}(\bTheta)^2\|\nabla_{\btheta} \mathcal{L}\|^2 + O\left(\mu^2\right)
\end{align*}
where $\bTheta = [\btheta_1,\btheta_2,\dots,\btheta_K]$ and $\btheta=\bTheta\balpha$.
\end{proposition}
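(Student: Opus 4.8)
The plan is to reduce the stated bound to a single per-direction second-moment computation and then invoke the i.i.d. averaging over the $m$ probe directions. First I would record the exact object being bounded. By the chain rule applied to $\btheta=\bTheta\balpha$, the true gradient factors as $\g=\nabla_{\balpha}\mathcal{L}=\bTheta^\top\nabla_{\btheta}\mathcal{L}$, so that $\|\g\|\le\sigma_{\mathrm{max}}(\bTheta)\,\|\nabla_{\btheta}\mathcal{L}\|$; I would set this spectral inequality aside and apply it only at the very end, carrying out the main argument entirely in terms of $\|\g\|^2$. I would also take as given (from the expansion stated just before the proposition) that $\widehat{\g}(\balpha;\u)=(\g^\top\u)\u+O(\mu^2)$, so that every moment below is computed for the leading term $(\g^\top\u)\u$ with an $O(\mu^2)$ remainder tracked separately.

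Next I would exploit the i.i.d. structure of the directions. Since $\overline{\g}-\g=\tfrac1m\sum_{r=1}^m(\widehat{\g}(\balpha;\u_r)-\g)$ and the $\u_r$ are independent, the cross terms collapse once we use the stated unbiasedness, giving
\begin{align}
\E\!\left[\|\overline{\g}-\g\|^2\right]=\frac{1}{m}\,\E\!\left[\|\widehat{\g}(\balpha;\u)-\g\|^2\right]+O(\mu^2).
\end{align}
This isolates the whole problem into a single-direction mean-squared error and explains the $1/m$ factor in the claimed bound.

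The core step is the single-direction moment computation, which is where the factor $\tfrac{K-1}{K}$ must appear. Because the probe directions are normalized ($\|\u\|^2=1$ deterministically) and satisfy $\E[\u\u^\top]=\tfrac1K\I$ --- as holds for both the normalized Gaussian and the normalized Rademacher draws used in the method --- I would expand the squared error as $\E\|\widehat{\g}\|^2-2\,\g^\top\E[\widehat{\g}]+\|\g\|^2$. The normalization makes the first term collapse to $\E[(\g^\top\u)^2]=\g^\top\E[\u\u^\top]\g=\tfrac1K\|\g\|^2$, while $\E[\widehat{\g}]=\E[\u\u^\top]\g=\tfrac1K\g$ supplies the cross term, so that
\begin{align}
\E\!\left[\|\widehat{\g}-\g\|^2\right]=\frac{1}{K}\|\g\|^2-\frac{2}{K}\|\g\|^2+\|\g\|^2=\frac{K-1}{K}\|\g\|^2.
\end{align}
Substituting into the averaged identity and finally applying $\|\g\|^2\le\sigma_{\mathrm{max}}(\bTheta)^2\|\nabla_{\btheta}\mathcal{L}\|^2$ yields the stated bound.

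The main obstacle is making the $O(\mu^2)$ bookkeeping and the averaging step mutually consistent. The single-direction constant $\tfrac{K-1}{K}$ is most naturally obtained as a mean-squared error about the true $\g$ (keeping the cross term with $\E[\widehat{\g}]=\tfrac1K\g$), whereas the clean $1/m$ reduction is simplest under the unbiasedness invoked in the preceding paragraph; I would therefore be careful to fix the estimator normalization and the precise sense of ``unbiased'' up front so that these two computations refer to the same object. Rigorously controlling the remainder also requires a smoothness hypothesis (e.g.\ $\mathcal{L}$ being $C^3$ with third derivatives bounded along the perturbation directions) so that the symmetric finite difference contributes only $O(\mu^2)$ after the odd-order terms cancel. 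Verifying $\E[\u\u^\top]=\tfrac1K\I$ from the normalization and symmetry is the one genuinely load-bearing moment identity; everything else is routine expansion.
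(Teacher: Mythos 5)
Your proposal is essentially the paper's own proof: the same first-order Taylor expansion of the two-point estimator, the same per-direction moment computation using $\|\u\|=1$ and $\E[\u\u^\top]=\tfrac{1}{K}\I$ to obtain $\tfrac{K-1}{K}\|\g\|^2$, the same $1/m$ averaging, and the same final spectral bound (your $\g=\bTheta^\top\nabla_{\btheta}\mathcal{L}$ even corrects a transpose slip in the paper's version). The bookkeeping tension you flag at the end --- that the $\tfrac{K-1}{K}$ constant arises as an MSE about $\g$ computed with $\E[\widehat{\g}]=\tfrac{1}{K}\g$, whereas the clean $1/m$ reduction requires genuine unbiasedness --- is present but unacknowledged in the paper's proof as well, so your attempt matches it in both substance and limitation.
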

\begin{proof}
    We consider the first order Taylor approximation of~\eqref{eq:ghat}:
    \begin{align*}
        \widehat{\g}(\balpha; \u) &= (\g^\top \u)\u + O(\mu^2)
    \end{align*}
    We proceed to compute $\E\left[\|\widehat{\g} - \g\|^2\right]$ by using the fact that $\u$ being a unit isotropic random vector implies $\|\u\|=1$ and $\E[\u\u^\top] = \frac{1}{K} \I$.
    \begin{align*}
         \E&\left[\|\widehat{\g} - \g\|^2\right] =  \E\left[\left\|(\g^\top \u)\u + O(\mu^2) - \g\right\|^2\right]\\
         &=  \E\left[\left\|(\g^\top \u)\u\right\|^2 - 2(\g^\top \u)^2\right] + \|\g\|^2 + O(\mu^2)\\
         &=  -\g^\top\E\left[\u\u^\top \right] \g + \|\g\|^2 + O(\mu^2)=  \frac{K-1}{K}\|\g\|^2 + O(\mu^2)
    \end{align*}
    The variance with respect to $\overline{\g}$ is then:
    \begin{align}
        \mathrm{Var}[\overline{\g}] &= \frac{1}{m}\mathrm{Var}[\widehat{\g}] = \frac{K-1}{mK}\|\g\|^2 + O(\mu^2) \label{eq:var_gbound}
    \end{align}
    We proceed to bound $\|\g\|^2$ as:
    \begin{align*}
        \|\g\|^2 = \|\nabla_{\balpha} \mathcal{L}\|^2
        =\|\bTheta \nabla_{\btheta(\balpha)} \mathcal{L}\|^2
        \leq\sigma_{\mathrm{\max}}(\bTheta)^2 \|\nabla_{\btheta(\balpha)} \mathcal{L}\|^2
    \end{align*}
    Substituting the inequality above into~\eqref{eq:var_gbound} completes the proof. 
\end{proof}
The proposition demonstrates that 
if $\sigma_{\mathrm{\max}}(\bTheta)$ is small, e.g., the expert parameters are more aligned, the variance grows slower in the number of experts $K$. The variance can be reduced with additional random direction samples (increasing $m$). Interestingly, we find that $m=1$ already provides stable updates for small $K$ empirically (see Section~\ref{sec:experiments}), which is consistent with the $1/m$ scaling.

Additionally, assuming that the loss function $\mathcal{L}$ is Lipschitz smooth with respect to $\theta$ leads to an upper bound on $\|\nabla_{\btheta}\mathcal{L}(\btheta)\|$.

\begin{figure*}[ht]
  \centering
  \includegraphics[width=\linewidth]{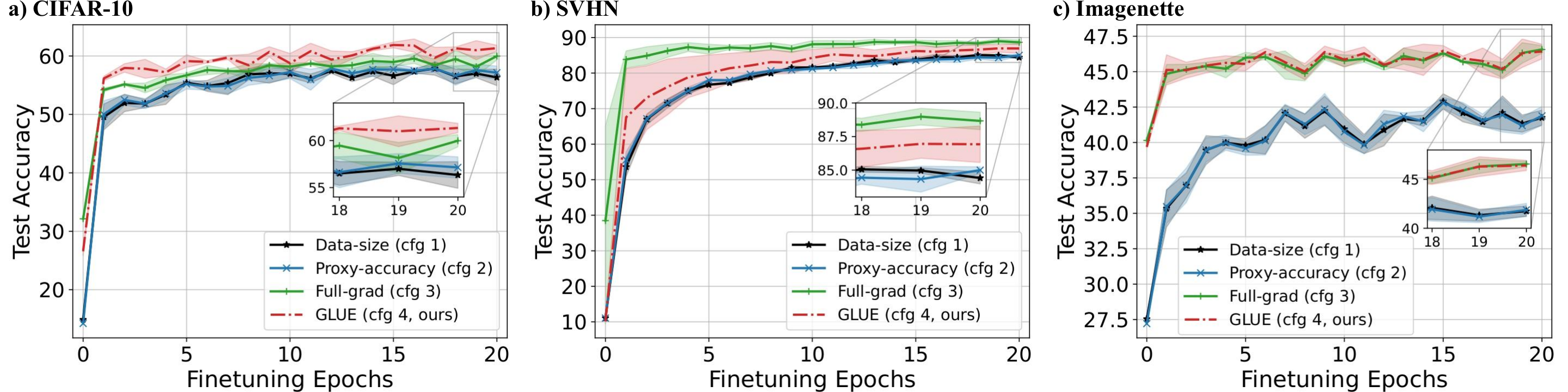}
  \vspace{-0.5 cm}
  \caption{Fine-tuning test accuracy vs.\ lightweight finetuning epochs (with fixed $\balpha$). Datasets: \textbf{(a)} CIFAR-10, \textbf{(b)} SVHN, \textbf{(c)} Imagenette. We compare Configs 1-4 (see Sec.~\ref{sec:experiments}). GLUE (Config 4; ours) shows similar trends to Config 3 (full backpropagation for gradient computation) and consistently achieves higher test accuracy and faster convergence than Configs 1-2 (heuristics for $\balpha$ based on data size and proxy accuracy).}
  \label{fig:experiments}
  \vspace{-0.5 cm}
\end{figure*}
\section{Experimental Evaluations}\label{sec:experiments}

We evaluate \textsc{GLUE} on CIFAR-10, SVHN, and Imagenette datasets using three backbones: ResNet-20, MobileNetV2, and an 8-layer ViT (patch size 8), respectively. We train $K=10$ expert models on heterogeneous splits of the base dataset. We adopt a non-IID experimental evaluation setup commonly used for federated learning: we sample per-class proportions for each expert from a Dirichlet distribution with concentration parameter $\delta = 0.5$, then subsample a total of 2000 images to match those class proportions. In general, $\delta \leq 0.5$ induces skewed (non-IID) splits; larger $\delta$ approaches IID. Each expert is trained for 40 epochs with a batch size of 64 and an Adam optimizer with a learning rate of 0.001 and momentum parameters $(0.9, 0.999)$. After training the experts, we evaluate four methods for determining $\balpha$, the mixture coefficients used to combine the expert models' parameters: \\
$\bullet\;$\textbf{Config 1 (Data-size):} $\alpha_i \propto n_i$, expert $i$'s pretraining dataset size. \\
$\bullet\;$\textbf{Config 2 (Proxy-accuracy):} $\alpha_i \propto \mathrm{Acc}_i$, the accuracy of expert $i$ on a small held-out validation set from the target domain. \\
$\bullet\;$\textbf{Config 3 (Full-grad $\balpha$):} Learn $\balpha$ by minimizing the target training loss with full backpropagation through $\btheta(\balpha)$; we parameterize $\balpha$ and use the same mini-batch schedule as \textsc{GLUE}. \\
$\bullet\;$\textbf{Config 4 (\textsc{GLUE}, ours):} Learn $\balpha$ using two-point SPSA on $\balpha$, requiring only forward evaluations (Sec.~\ref{sec:method}). \\
Configs~1 and 2 do not optimize on the target domain; Configs~3 and 4 \emph{learn} $\balpha$ using target-domain training data. In Configs~3 and 4, we use Adam optimizer with learning rate 1e-2, momentum parameters (0.9, 0.99) to update $\balpha$. Throughout training, $\btheta(\balpha)$ is constrained to lie in the convex hull of experts (i.e., $\sum_i\alpha_i=1$). Additionally for GLUE, we update $\balpha$ by randomly sampling only \emph{one} random direction $u$. 
We use 10,000 images sampled IID from the base dataset to learn the unification parameters $\balpha$ and finetune the models as described below. All unified models are subsequently evaluated on the original testing split of each of the base datasets. 

After determining $\balpha$, we form the unified initialization $\btheta(\balpha)$ for the target model and report:
(i) \emph{prior (zero-shot)} performance of $\btheta(\balpha)$ without any target training, and 
(ii) \emph{finetuned} performance after updating all network parameters on the target training set while \emph{keeping $\balpha$ fixed}. We also report the test accuracy of the \emph{individual experts} before blending in Table~\ref{tab:experts} for reference. We ran all finetuning experiments with three different random seeds and report the averaged test accuracy.

\begin{table}[t]
\centering
\caption{Test accuracies (\%) of the 10 experts evaluated on the target training dataset. We bolded the highest/lowest accuracies per row.}
\label{tab:experts}
\vspace{0.2cm}
\includegraphics[width=\linewidth]{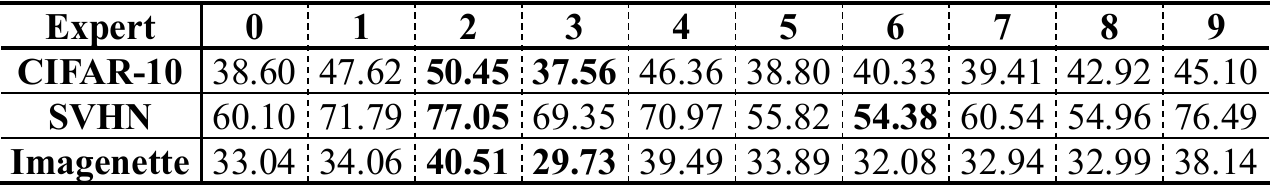}
\end{table}

Figure~\ref{fig:experiments} plots test accuracy during fine-tuning for the four configurations. Across all datasets and backbones, \textsc{GLUE} (Config~4) consistently outperforms data-size weighting (Config~1) by up to \textbf{6.7}\% on CIFAR-10, \textbf{3.8}\% on SVHN, and \textbf{8.5}\% on Imagenette. GLUE also outperforms proxy-accuracy weighting (Config~2) by up to \textbf{7.0}\% on CIFAR-10, \textbf{3.9}\% on SVHN, and \textbf{9.1}\% on Imagenette. \textsc{GLUE} attains similar performance to full-grad $\balpha$ (Config~3): GLUE outperforms full-grad on CIFAR-10 by up to \textbf{4.5}\% and has at most \textbf{1.4}\% and \textbf{0.5}\% variation in test accuracy on SVHN and Imagenette, respectively. To minimize computational cost in these experiments, GLUE computes only one two-point estimate to approximate the gradient at each iteration ($m=1$). However, the variation in test accuracy between GLUE and full-grad $\balpha$ (Config~3) can be reduced by averaging multiple two-point estimates to obtain better approximations to the true gradient---as in \eqref{eq:estimated graident}.
 
In summary, Figure~\ref{fig:experiments} shows that \textsc{GLUE} (Config~4) starts from a stronger prior and converges faster to a higher test accuracy than the alternatives, closely tracking the performance of the full-gradient method (Config~3).

These results highlight that \textbf{1)} the performance gains from learning $\balpha$ persist even after finetuning the target model and \textbf{2)} GLUE mimics the gains of full-gradient $\balpha$ learning while providing computational benefits by relying only on forward evaluations instead of costly backpropagation.  
\section{Discussion}\label{sec:discussion}
In \textbf{GLUE}, each step forms two blends and runs two forward passes; there is no backward pass. The per-step cost advantage over full-gradient mixing follows from
\begin{equation*}
\begin{aligned}
T_{\text{full}}-T_{\text{SPSA}}
&=(1+\gamma)F+C_{\text{mix}}+D_{\balpha}-2(F+C_{\text{mix}}) \\
&=(\gamma-1)F - C_{\text{mix}} + D_{\balpha}
\end{aligned}
\end{equation*}
As neural networks grow, the backward term $(\gamma-1)F$ dominates, so the gap widens in favor of SPSA. In practice, we also amortize $C_{\text{mix}}$ by caching the two blended parameter sets across the paired forward passes.

In this work, GLUE assumes that the architectures are compatible, meaning they share backbones and heads. When the heads differ, we mix the backbone and select or align a single head, optionally refreshing the BatchNorm statistics.

Other limitations include the convex-hull restriction (the target optimum may lie outside the experts' span), possible misalignment across experts (objectives or preprocessing), hyperparameter sensitivity of SPSA (too large radius $\mu$ of \eqref{eq:estimated graident} biases the estimate; too small increases noise), and variance growing with $K$ regardless of classical SPSA convergence-to-stationary-point guarantees. These can be mitigated by light calibration, averaging a few directions, or modest schedules, but remain trade-offs.

\section{Conclusion} \label{sec:conclusion}
We presented \textbf{GLUE}, a forward-only approach that learns a single deployable prior by mixing pretrained experts in a convex manner and optimizing the mixture via two-point SPSA. The method avoids backpropagation and is accompanied by cost and variance analyses that explain its efficiency and stability. 
Empirically, GLUE delivers consistent improvements over data-size weighting and proxy-metric baselines while remaining lightweight for deployment when compared to the learned full gradient mixing baseline.

\bibliographystyle{IEEEbib}
\bibliography{mybib}

\end{document}